\documentclass[letterpaper, 10 pt, conference]{ieeeconf}  

\IEEEoverridecommandlockouts                              

\usepackage{amsmath,amssymb,mathtools,bm}

\usepackage{graphicx}
\usepackage{booktabs,makecell,multirow}
\usepackage{subcaption} 

\usepackage{cite}
\usepackage{url}
\usepackage[hidelinks]{hyperref}

\usepackage[linesnumbered,ruled,vlined]{algorithm2e}
\usepackage{algorithmic}

\usepackage{xcolor}
\usepackage{graphicx}

\newcommand{\R}{{\mathbb{R}}}

\newcommand{\G}{\mathcal{G}}

\newcommand{\T}{\boldsymbol{T}}

\DeclareMathOperator*{\argmax}{arg\,max}
\DeclareMathOperator*{\argmin}{arg\,min}

\newlength{\dhatheight}

\newtheorem{lemma}{Lemma}

\newtheorem{proposition}{Proposition}

\title{\LARGE \bf
Multi-Agent Next-Best-View Optimization for Risk-Averse Planning
}

\author{
Amirhossein Mollaei Khass, Vivek Pandey, Guangyi Liu, Athanasios Cosse, Emrah Bayrak, Nader Motee%
\thanks{A.M. Khass, V. Pandey, A. Cosse, A. Bayrak and N. Motee are with the Department of Mechanical Engineering and Mechanics, Lehigh University. {\tt\small \{ammb23, vkp219, asc425, bayrak, motee\}@lehigh.edu}.}%
\thanks{G. Liu is with Amazon Robotics. {\tt\small gyliu@amazon.com}. This paper is independent of his position at Amazon and does not relate to his employment there.}%
}

\begin{document}

\maketitle
\thispagestyle{empty}
\pagestyle{empty}

\begin{abstract}
Multi-agent Next-Best-View (NBV) selection for safe path planning in uncertain and unknown environments requires informative, safety-aware, and efficient coordination. Centralized approaches rely on sharing raw sensor data or significant communication overhead, resulting in limited scalability. We propose a distributed, risk-aware multi-agent NBV framework in which each robot maintains a private local 3D Gaussian Splatting map and the team jointly maximizes expected information gain (EIG) restricted to masked zones along planned trajectories.
The resulting distributed objective is solved by Consensus ADMM (C-ADMM) over a communication graph, with each robot exchanging only candidate viewpoints, planned trajectory descriptors, and scalar EIG contributions. Collision risk along each trajectory is modeled via Average Value-at-Risk (AV@R) over the local 3DGS map and used both to shape the masking radius and to score planned paths. Experiments in Gibson environments at multiple team sizes show that the distributed formulation approaches the centralized baseline in mapping quality and trajectory safety while reducing communication by orders of magnitude.
\end{abstract}
\section{Introduction}
Autonomous navigation in unknown environments requires robots to simultaneously build a map while executing safe trajectories. Achieving this objective efficiently requires selecting informative viewpoints that reduce map uncertainty while ensuring safety. Recent advances in neural scene representations, particularly in 3D Gaussian Splatting (3DGS)~\cite{kerbl3Dgaussians} have enabled online, view-consistent mapping for robotic perception~\cite{chen2024splatnav,Matsuki2024_GaussianSplattingSLAM}. Uncertainty reduction is fundamental to active SLAM, where the objective is to select viewpoints to reduce ambiguity in scene reconstruction to enhance mapping~\cite{Jiang2024_AGSLAM} and localization~\cite{pandey2025efficient}. Building on these representations, active SLAM with NBV selection typically outperforms passive exploration in terms of mapping efficiency and information acquisition~\cite{Schwager2022_VISTA,Li2024_ActiveSplat}. However, safety must also be considered in active perception, as selecting informative viewpoints may lead robots toward uncertain regions of the environment where the risk of collision is higher~\cite{khass2026conflict}.

Neural scene representations including NeRF \cite{mildenhall2020nerf} and 3DGS~\cite{kerbl3Dgaussians} representations have  been extended to multi-robot settings ~\cite{yu2025_hammer,yugay2025_magic_slam,zhao2024_udon}. Among these approaches,  3DGS is especially appealing for robotic perception due to its faster optimization and more stable real-time rendering compared with NeRF. These properties make 3DGS well suited for online perception and active view planning in collaborative robotic teams. 
Despite these advances, fundamental challenges remain. Existing multi-robot neural mapping systems typically rely on centralized coordination, large map transmissions, or global alignment procedures, which become costly and limit scalability and deployment in communication-constrained environments.
Effective communication requires reasoning about how local observations influence the map uncertainty relevant to the agents' planned trajectory. 

To address these challenges, we propose a Distributed Multi-Robot Information-Driven and Risk-Aware NBV Optimization framework, in which the team of robots coordinates viewpoint selection in order to maximize mutual information with an efficient communication method. Unlike centralized methods, our formulation avoids constructing a global map or sharing raw data. 
In addition, each robot evaluates the safety of its future trajectory by considering the risk of collision via average value at risk (AVaR) assessment. This enables robots to balance information gathering with safety during exploration. By focusing information gain evaluation along planned trajectories,  the proposed distributed NBV optimization eliminates the need to compute information gain over a large global 3DGS map.

\textbf{Contributions.} This work makes the following contributions:
\begin{itemize}
    \item A distributed multi-agent Next-Best-View optimization framework over local 3DGS maps that maximizes team information gain while robots plan and execute risk-averse safe trajectories.
    \item A C-ADMM solver with communication payload independent of map size
    \item A risk-aware masked-zone construction using AV@R to acquire information for future trajectory
    \item Gibson environment evaluation across team sizes in centralized, limited-sharing, and single-agent baselines

\end{itemize}

\section{Related Work}
\textbf{Next Best View Selection for Uncertainty Reduction.} 
Recent work addresses uncertainty-aware viewpoint selection for NeRF and 3DGS map representations, with the goal of improving mapping quality~\cite{zhan2022activemap,ong2024unified3dgs,Schwager2022_VISTA}.
Recent work has also explored safety aware viewpoint selection as \cite{liu2024riskaware} couple Fisher based NBV decisions with collision risk modeling, while~\cite{khass2026conflict,Khass2025_NBV_RiskAverse} considers risk in decision making and develops risk averse safe path planning by optimizing NBV to maximize information gain.
Our work extends Fisher based NBV optimization to a distributed multi-robot setting, enabling coordinated viewpoint selection without centralized map fusion.
In this work, we build on MonoGS~\cite{Matsuki2024_GaussianSplattingSLAM} as the underlying scene representation for active SLAM.


\textbf{Multi-Agent Radiance-Field Mapping.} Recent work has extended neural scene representations to multi-robot settings. In this direction, RAMEN~\cite{zhao2025ramen} introduces real-time asynchronous multi-robot neural implicit mapping, enabling agents to update a shared implicit model without strict synchronization. MAGiC-SLAM~\cite{yugay2025_magic_slam} proposes a centralized multi-agent 3DGS SLAM framework that merges individual robot maps into a single global representation, improving tracking and loop closure performance. HAMMER~\cite{yu2025_hammer} introduces a heterogeneous multi-robot 3DGS system capable of continual online mapping with semantic capabilities. SIREN~\cite{Shorinwa2025_SIREN} addresses multi-robot 3DGS map through a semantic, initialization free alignment method, improving robustness when robots construct disjoint splat maps from diverse viewpoints.
In this work, we aim to enable distributed multi-robot NBV selection using lightweight information exchange, allowing each robot to plan informative viewpoints from its local 3DGS map. 
\section{Preliminaries}
This section establishes the problem formulation for our multi-robot active NBV selection framework. We first describe the scene representation based on 3DGS~\ref{subsec:3dgs_representation}. We then introduce the Expected Information Gain (EIG) objective used to evaluate candidate viewpoints~\ref{subsec:eig_formulation}, followed by the risk metric used to quantify the safety of future trajectories~\ref{subsec:risk_metric}. Finally, we formalize a risk-aware scoring function that prioritizes areas whose uncertainty most affects future navigation decisions.
\subsection{Scene Representation via 3D Gaussian Splats}
\label{subsec:3dgs_representation}
We represent the environment using the differentiable 3D Gaussian Splatting (3DGS) model~\cite{kerbl3Dgaussians}. 
Each scene element is an anisotropic Gaussian splat $g = (\mu, \Sigma, o, c)$ with mean-value $\mu \in \mathbb{R}^3$ and covariance 
$\Sigma = R S^2 R^\top \in \mathbb{S}^3_{++}$, where $R \in \mathrm{SO}(3)$ is the  orientation matrix, $S\!\in\!\mathbb{R}^3_{+}$ specifies axis-aligned scales, $o$ is the opacity, and $c$ is a low-dimensional color descriptor. Given a camera pose $T\!\in\!\mathrm{SE}(3)$, each  Gaussian splat is projected onto the image plane using a first-order linearization of the camera model
\[
\mu_I = \pi(T^{-1}\mu), 
\qquad
\Sigma_I = J \Sigma J^\top ,
\]
where $\pi(\cdot)$ is the projection map and $J$ is the Jacobian of the local warp.  
The rendered pixel color follows the standard alpha-compositing rule:
\begin{equation}   
    C_p = \sum_{i=1}^N c_i \,\rho_i \prod_{j<i}(1-\rho_j), 
\end{equation}
with blending weights $\rho_i$ induced by the 2D projected Gaussians.
Thus, training uses the volumetric 3DGS rendering loss:
\begin{equation}
\mathcal{L}
= \sum_p 
\Big(
\mathcal{L}_1(C_p,\hat{C}_p)
+
\psi\,\mathcal{L}_1(D_p,\hat{D}_p)
\Big),
\label{mapping}
\end{equation}
where $C_p$ and $D_p$ are the ground truth RGB and depth, $\hat{C}$ and $\hat{D}$ are the rendered predictions, $\mathcal{L}_1$ is the $L_1$ loss, and $\psi \in [0,1)$ balances depth relative to RGB.
\begin{figure*}[t]
\vspace{0.25cm}
    \centering
    \includegraphics[width=\textwidth,trim={1.8cm 6.5cm 3.8cm 0.7cm},
    clip]{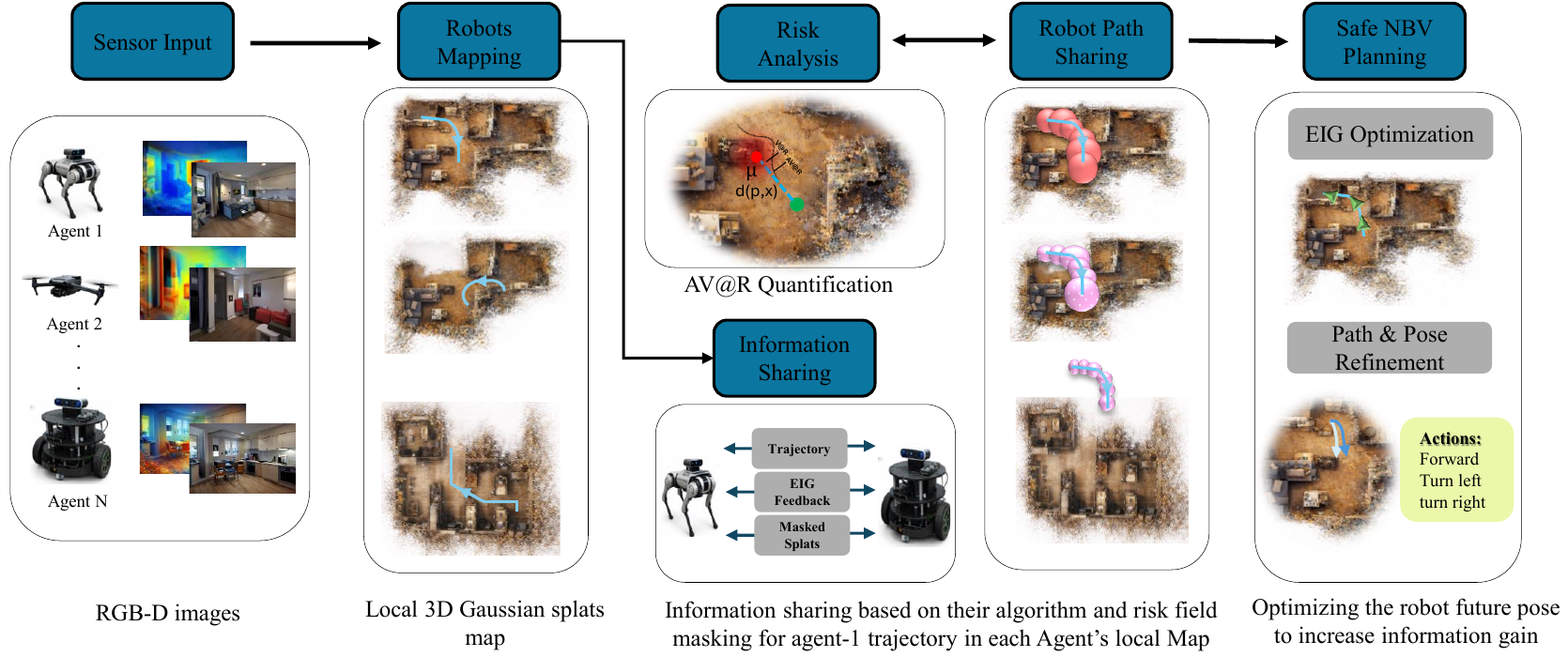}
    \caption{\textbf{System overview} of the distributed, information-driven, and risk-aware multi-robot NBV framework. In a team of $R$ agents, each agent constructs a local 3DGS map from RGB-D observations. Agents exchange future trajectory and EIG scores over masked regions along trajectories in their local maps. Viewpoints are selected to maximize risk-aware information gain, after which each robot executes the corresponding actions.}
\label{fig:system_overview}
    \vspace{-0.5cm}
\end{figure*}
\subsection{Expected Information Gain}
\label{subsec:eig_formulation}

To quantify the informativeness of a future viewpoint, we employ an information theoretic formulation based on entropy reduction \cite{kirsch2022unifying}.


Let $\omega$ denote the parameter vector describing the 3DGS scene representation $\Theta$.  
Following a second-order approximation of the likelihood around the current estimate
$\omega^\ast$, the {observed information} associated with a measurement
$(T_i, Y_i)$ is defined as the negative Hessian of the log-likelihood:
\begin{equation}
    \mathcal{H}[Y_i \mid T_i, \omega^\ast] 
    = 
    -\,\nabla^2_{\omega}\log p(Y_i \mid T_i, \omega)\big|_{\omega=\omega^\ast}.
    \label{eq:obs_info_single}
\end{equation}


In neural and volumetric rendering, each observation $(T,Y)$ is generated by a differentiable
function $f(T,\omega)$ representing the rendering model under robot pose $T$. Minimizing the negative
log-likelihood,
\begin{equation}
    -\log p(Y \mid T,\omega) = \| Y - f(T,\omega) \|^2,
\end{equation}

For selecting the next robot pose $T \in \mathrm{SE}(3)$ without explicitly rendering the corresponding observation, the expected information gain with respect to the current parameter estimate $\omega^\ast$ and candidate acquisition pose $(T^{\mathrm{acq}}, Y^{\mathrm{acq}})$ can be approximated using the Fisher information matrix as\cite{kirsch2022unifying, jiang2024Fisherrf}:
\begin{equation}
    \mathrm{EIG}(T^{\mathrm{acq}})
    =
    \mathrm{tr}\!\left(
        \mathcal{H}[Y^{\mathrm{acq}} \mid T^{\mathrm{acq}}, \omega^\ast]\,
        \mathcal{H}_{\text{prior}}[\omega^\ast]^{-1}
    \right),
    \label{eq:Fisher_bounds}
\end{equation}
which quantifies the expected reduction in posterior uncertainty contributed by viewpoint $T^{\mathrm{acq}}$.
The prior information matrix $\mathcal{H}_{\text{prior}}[\omega^\ast]^{-1}$ is obtained by accumulating the Hessians of the model parameters across all previously observed views before inversion. Moreover, the Hessian matrix is formed using the diagonal approximation in FisherRF~\cite{jiang2024Fisherrf} \[\mathcal{H}[Y \mid T, \omega^\ast] \simeq \mathrm{diag}( \nabla_\omega f(T,\omega)^\top\nabla_\omega f(T,\omega)) + \lambda I.\]

\subsection{Risk of Collision in 3DGS}
\label{subsec:risk_metric}
In order to quantify the robot's safe planning in an uncertain 3DGS map, we adopt the Average Value-at-Risk (AV@R) as a risk measure to quantify collision risk. For a random variable $ y : \Omega \!\to\! \mathbb{R}$ on probability space
$(\Omega,\mathcal{F},\mathbb{P})$, AV@R is defined~\cite{sarykalin2008varcvar} as
\begin{equation}
    \mathrm{AV@R}_{\varepsilon}(y)
    :=
    \mathbb{E}\!\left[
        y \;\middle|\; y < \mathrm{V@R}_{\varepsilon}(y)
    \right]
\end{equation}
where
\begin{equation}
    \mathrm{V@R}_{\varepsilon}(y)
    :=
    \inf\!\left\{
        z \in \mathbb{R} \;\middle|\; \mathbb{P}(y < z) > \varepsilon
    \right\}
\end{equation}
and $\varepsilon \in (0,1)$ specifies the risk level. Risk of collision for a robot at position $p \in \mathbb{R}^3$ to a Gaussian splat $g_s \sim \mathcal{N}(\mu_s, \sigma_s^2 I_3)$ with signed-distance variable
$d(p,g_s) \sim \mathcal{N}(\lVert \mu_s - p \rVert_2,\, \sigma_s^2)$,
can be expressed as
\begin{equation}
    \mathrm{AV@R}_{\varepsilon}(d(p,g_s))
    =
    \lVert p-\mu_s \rVert_2
    \;-\;
    \sigma_s \,
    \frac{1}{\varepsilon \sqrt{2\pi}}
    \exp\!\left( -\iota^2 \right)
    \label{eq:avar_gaussian}
\end{equation}
where 
\[
    \iota = \operatorname{erf}^{-1}(2\varepsilon - 1).
\]

To assess robot safety for planning purposes, we define a risk field function $\alpha: \R^3 \rightarrow \R$ by considering the worst-case collision risk over the 3DGS map $\mathcal{G}$ by 
\begin{equation}
    \alpha(p)
    :=
    \min_{g_s \in \mathcal{G}}
    \mathrm{AV@R}_{\varepsilon}\!\big( d(p, g_s) \big),
    \label{eq:risk_min_avar}
\end{equation} 

\subsection{Risk-Aware Masked Zones}
\label{subsec:NBV_optimization}
The unconstrained NBV problem considers finding an optimal pose by solving an optimization problem over the entire known map. This will reduce the overall uncertainty over the entire map. However, one can impose constraints and select NBVs to reduce uncertainty only along the robot's future trajectory \cite{Khass2025_NBV_RiskAverse} by constructing a risk-aware masked zone along the trajectory of the robot to identify high-risk  Gaussian splats.

Let the current planned trajectory of robot $i$ be the sequence of waypoints
$\mathcal{P}_i = \{p_1, p_2, \ldots, p_K\}$ with $p_k \in \mathbb{R}^3$, each
waypoint is assigned a masking radius
\begin{equation}
    r_{\mathrm{mask}}(p_k)
    = r_\text{min} +
    \beta_1 e^{-\beta_2 \alpha(p_k)},
    \qquad
    \beta_1, \beta_2 > 0
    \label{eq:masking_rad}
\end{equation}
where $\alpha(p_k)$ in Eq.~\eqref{eq:risk_min_avar} denotes the risk value at $p_k$.
For a given map $\G$, the masked zone of planned trajectory $\mathcal{P}$ is defined by
\begin{equation} 
    \Pi_{\mathcal{P}_i}\big|_{\G}
    =\Big\{g_{s} \in \G ~\Big| ~ \mu_s \in  \bigcup_{p_k \in \mathcal{P}_i}
    \mathcal{B}\!\left(p_k,\, r_{\mathrm{mask}}(p_k)\right)\Big\}
    \label{eq:masked_region_G}
\end{equation}
with Gaussian splat $g_s = (\mu_s, \Sigma_s, \alpha_s, c_s)$
\begin{equation}
    \Pi_{\mathcal{P}_i}
    =
    \bigcup_{p_k \in \mathcal{P}_i}
    \mathcal{B}\!\left(p_k,\, r_{\mathrm{mask}}(p_k)\right)
    \label{eq:masked_region}
\end{equation}

which identifies the region most relevant to risk-aware safety margin.

To construct a risk-aware information gain and prevent redundant multi-view acquisition, we use a proximity weighting function to prioritize nearby Gaussian splats $g_i$ in $\Pi_{\mathcal{P}}$ for safe navigation as~\cite{Khass2025_NBV_RiskAverse}:
\begin{equation}\label{eqn:V_function}
    v(g_s) = \gamma_1 \hspace{0.05cm} e^{-\gamma_2 \left\|p  - \mu_s\right\|_g},
\end{equation}
where $\lVert\cdot\rVert_g$ is the graph (grid-based) distance, and $p$ is the robot waypoint position. Therefore, by incorporating Fisher Information, the weighted Hessian approximation is defined:
\begin{equation}
H’'[T|\omega^\ast] := [\bar{V}\mathcal{H}[Y \mid T, \omega^\ast]],
\end{equation}
where $\bar{V} = \mathrm{diag}(V_1, \dots, V_{|\mathcal{G}|})$, with $V_s = v(g_s)\cdot I_{|\omega_s|}$ weighting the contribution of each Gaussian splat. Here, $\bar{\omega}_s$ is the parameter vector for splat $g_s$.
Finally, the Expected Information Gain (EIG) is given by:
\begingroup
\setlength{\abovedisplayskip}{3pt}
 \setlength{\belowdisplayskip}{3pt}
 \setlength{\abovedisplayshortskip}{2pt}
 \setlength{\belowdisplayshortskip}{2pt}
\begin{equation}
\mathcal{I}(T; \Pi_{\mathcal{P}}; \mathcal{G}) =
\mathrm{tr}\!\left(
H''[ T, \omega^\ast]\;
H''[\omega^\ast]^{-1}_{\mathrm{prior}}\right),
    \label{eq:weighted_eig}
\end{equation}
\endgroup
 over the local 3DGS map $\mathcal{G}$ and relative to the Gaussian splats in $\Pi_\mathcal{P}$~\cite{khass2026conflict,Khass2025_NBV_RiskAverse}. In this formulation, nearby trajectory-relevant Gaussian splats carry more information due to the risk of collision. 
\vspace{-0.15cm}
\section{Problem Statement}
\label{sec:method}
We consider a multi-agent scenario in which a team of $N$ robots collaboratively navigate a shared, partially known environment while jointly improving its 3D Gaussian Splatting (3DGS) reconstruction. Robots are indexed by $\mathcal{V} := \{1,\dots,N\}$, and we assume each robot localizes itself in a common world frame so that all local maps are expressed in global coordinates. Each robot $i \in \mathcal{V}$ maintains a private local 3DGS map $\mathcal{G}_i$, built and updated online from its onboard RGB-D stream. The 3DGS parameter vector of $\mathcal{G}_i$ is denoted $w_i$, and includes the means, covariances, opacities, and color descriptors of all Gaussian primitives owned by robot $i$. The centralized map of the entire
environment is then defined as the disjoint union of the robots'
local maps
\begin{equation}
  \mathcal{G}_\text{cent} \;=\; \bigsqcup_{i=1}^{N} \mathcal{G}_i.
  \label{eq:Gcent}
\end{equation}
The centralized multi-agent NBV problem aims to jointly optimize the viewpoints of all robots in order to maximize the total EIG over the shared centralized map $\mathcal{G}_{\mathrm{cent}}$. 
For each robot $i \in \mathcal{V}$, let $T_i \in SE(3)$ denote its candidate viewpoint and
$\mathcal{P}_i$ denote its planned trajectory. To restrict information evaluation to
regions relevant to robot motion, robot $i$ constructs a masked zone $\Pi_{\mathcal{P}_i}|_{\mathcal{G}_\text{cent}}$ along $\mathcal{P}_i$.
The team-wide masked zone over the centralized map is
\begin{equation}
  \Pi_\mathcal{P}\big|_{\mathcal{G}_\text{cent}}
  \;:=\;
  \bigcup_{i=1}^{N} \Pi_{\mathcal{P}_i}\big|_{\mathcal{G}_\text{cent}},
  \label{eq:team_mask}
\end{equation}
collects all trajectory-relevant Gaussian primitives in $\mathcal{G}_\text{cent}$.

Let $\boldsymbol{T} := (T_1, \dots, T_N)$ denote the team viewpoint vector.
The centralized multi-agent NBV problem jointly optimize $\boldsymbol{T}$ to maximize the total expected information gain (EIG) over the trajectory relevant region of the centralized map:
\begin{equation}
  \boldsymbol{T}^{*}
  \;=\;\underset{\{T_i\}_{i=1}^{N}}{\arg\textrm{maximize}}\;
  \;\sum_{i=1}^{N}
  \mathcal{I}\!\left(T_i;\;
\Pi_{\mathcal{P}_i}\big|_{\mathcal{G}_\text{cent}};\;
    \mathcal{G}_\text{cent}\right).
  \label{eq:cent_obj}
\end{equation}

The centralized formulation requires robots to transmit their local maps to a central unit to construct the global map $\mathcal{G}_{\mathrm{cent}}$, which results in costly communication. We develop a distributed formulation in which each robot evaluates EIG only on its own local map and exchanges compact viewpoint-related summaries rather than raw map representations.
\begin{figure}[t]
        \vspace{0.25cm}
    \centering
    \begin{subfigure}[b]{0.48\linewidth}
\centering\includegraphics[width=1\linewidth,,height=0.17\textheight]{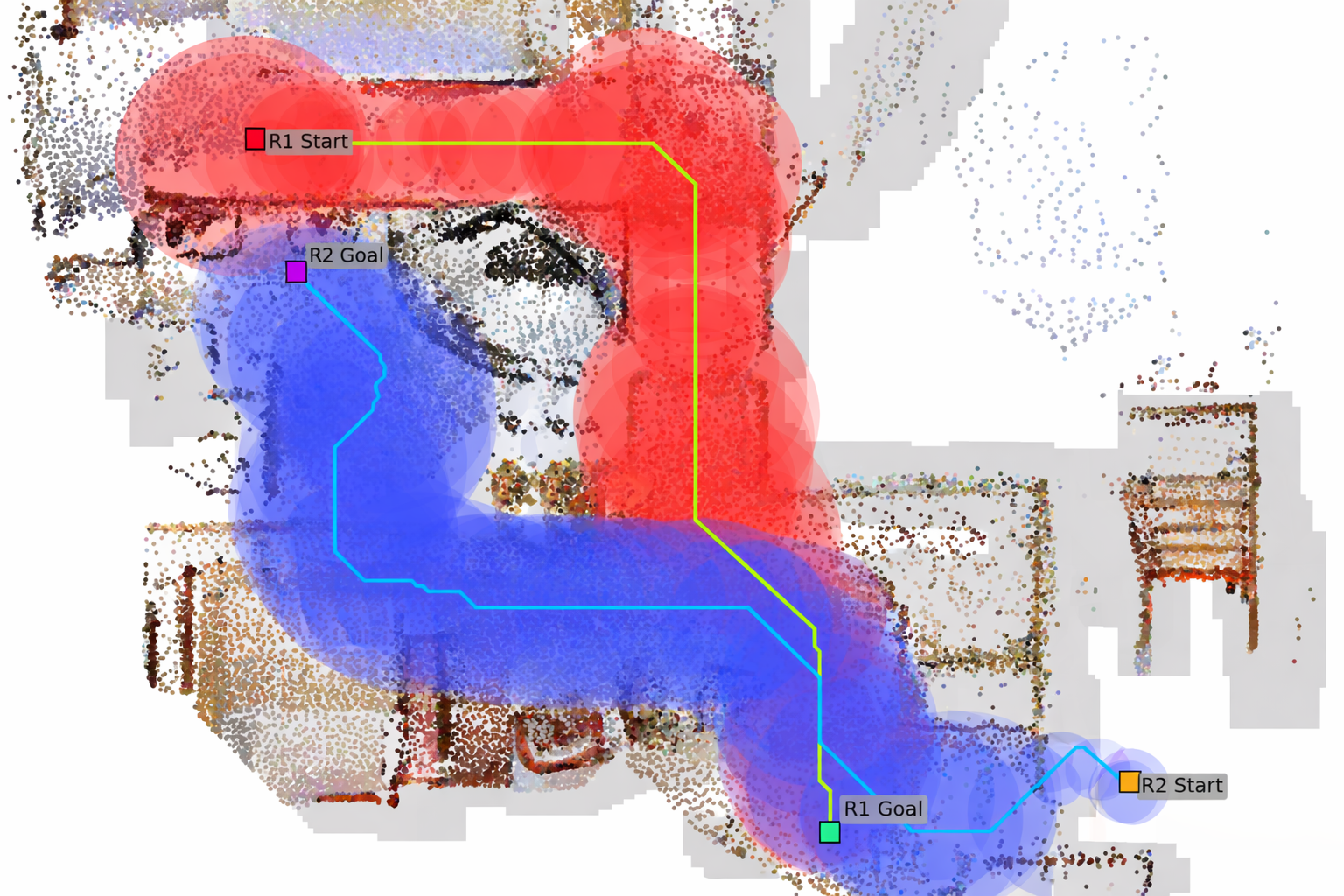}
        \caption{}
        \label{fig:swormnille_recon}
    \end{subfigure}
    \hfill
    \begin{subfigure}[b]{0.38\linewidth}
        \centering
        \includegraphics[width=1\linewidth,,height=0.17\textheight]{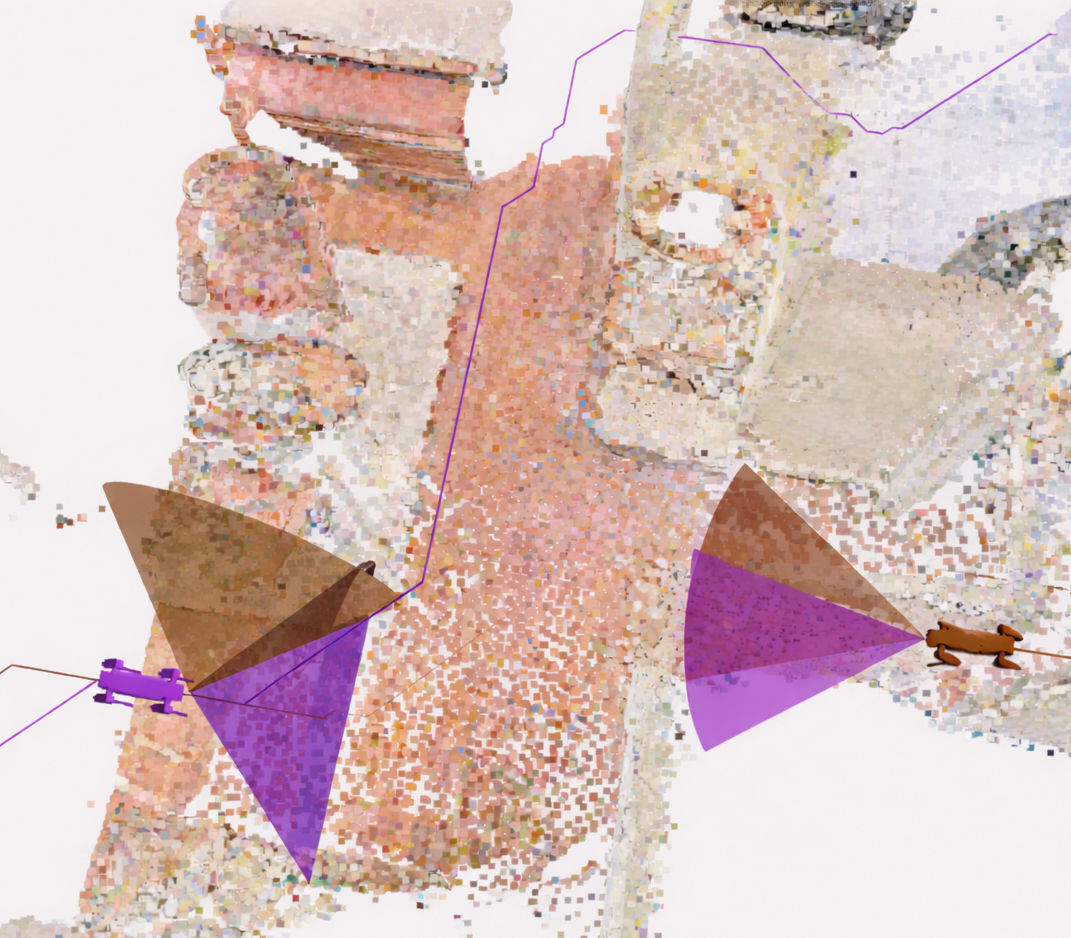}
        \caption{}
    \end{subfigure}
    \caption{(a) Planned safe trajectory with the corresponding mask. (b) Each robot proposes an informative view for itself and for its teammate from its local 3DGS map.}
    \vspace{-0.35cm}
\end{figure}

\section{Distributed NBV Selection}
\subsection{Formulation of Distributed Objective Function}
To formulate the distributed NBV optimization problem, we first decompose the centralized objective in~\eqref{eq:cent_obj} so that each robot evaluates information gain using only local maps.

\begin{proposition}[Block-additive decomposition of EIG]
\label{prop:localsum}
Under the diagonal Fisher information approximation, the observed
information matrix $\mathbf{H}''[Y \mid T, w^*]$ is block-diagonal with respect to the splat parameter induced by the disjoint union in~\eqref{eq:Gcent}. Therefore, the centralized EIG of robot $i$
decomposes across local map blocks as
\begin{equation}
  \mathcal{I}\!\left(T_i;\;
    \Pi_{\mathcal{P}_i}\big|_{\mathcal{G}_\text{cent}};\;
    \mathcal{G}_\text{cent}\right)
  \;=\;
  \sum_{j=1}^{N}
  \mathcal{I}\!\left(T_i;\;
\Pi_{\mathcal{P}_i}\big|_{\mathcal{G}_\text{cent}};\;
    \mathcal{G}_j\right).
  \label{eq:block_decomp}
\end{equation}
\end{proposition}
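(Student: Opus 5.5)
The plan is to work entirely at the level of the parameter vector and reduce the claim to additivity of the trace over diagonal blocks. First I will fix an ordering of the centralized parameter vector $w$ that is compatible with the disjoint union in~\eqref{eq:Gcent}, namely $w = (w_1,\dots,w_N)$, where $w_j$ stacks the parameters $\bar{\omega}_s$ of all splats $g_s \in \mathcal{G}_j$. Because the union is disjoint, every splat parameter belongs to exactly one block $w_j$, so this is a genuine partition of coordinates.

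Under the FisherRF diagonal approximation, $\mathcal{H}[Y\mid T,w^*] \simeq \mathrm{diag}(\nabla_w f^\top \nabla_w f)+\lambda I$ is diagonal, hence a fortiori block-diagonal with respect to this partition. The weighting $\bar V=\mathrm{diag}(V_1,\dots,V_{|\mathcal{G}_\text{cent}|})$ is also diagonal, with $V_s=v(g_s)I_{|\omega_s|}$. I will take $v(g_s)=0$ for splats outside $\Pi_{\mathcal{P}_i}|_{\mathcal{G}_\text{cent}}$, which is the meaning of restricting EIG to the mask. It follows that $H''[T_i,w^*]=\bar V\mathcal{H}$ is diagonal, with block $j$ equal to $\bar V_j\mathcal{H}_j$. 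The same holds for the prior $H''_{\text{prior}}$, which is an accumulated sum of diagonal Hessians plus $\lambda I$.

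Since $\lambda>0$, the prior is invertible, and the inverse of a block-diagonal matrix is block-diagonal with blockwise inverses. The product $H''[T_i]\,(H''_{\text{prior}})^{-1}$ is therefore block-diagonal with blocks $H''_j[T_i]\,(H''_{\text{prior},j})^{-1}$. The trace of a block-diagonal matrix is the sum of the traces of its blocks. Each block trace is exactly $\mathcal{I}(T_i;\Pi_{\mathcal{P}_i}|_{\mathcal{G}_\text{cent}};\mathcal{G}_j)$, i.e.\ \eqref{eq:weighted_eig} evaluated on the parameters of $\mathcal{G}_j$ with the same weights. Summing over $j$ gives~\eqref{eq:block_decomp}.

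The main obstacle is the weighted prior. As written, \eqref{eq:weighted_eig} uses $H''^{-1}_{\text{prior}}$, and if some $v(g_s)=0$ the weighted prior is singular. The first fix I would try is to interpret the $\lambda I$ regularization as applied after weighting. Failing that, I would define the inverse on the masked support, i.e.\ use the pseudo-inverse, which is also blockwise. Either convention preserves block-diagonality, so the decomposition goes through.

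A second subtlety is that $\nabla_w f$ couples splats through alpha-compositing, so the full Gauss–Newton matrix is not block-diagonal. The diagonal approximation is precisely what removes these cross terms, so I will state explicitly that the result holds under that approximation, as the proposition assumes.
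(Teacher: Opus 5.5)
Your proposal is correct and follows essentially the same route as the paper: order the parameters by the disjoint union, use the diagonal Fisher approximation so that the weighted and prior information matrices are (block-)diagonal, and conclude by additivity of the trace over diagonal blocks. Your extra care about the blockwise inverse and the possibly singular weighted prior only makes explicit what the paper's one-line argument leaves implicit. One point to keep in mind, which the paper also relies on: identifying each block trace with the $j$-th term requires reading that term as the $\mathcal{G}_j$-block of the centralized quantities, with gradients taken through the full centralized render, not as an EIG recomputed from $\mathcal{G}_j$ alone.
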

Here $  \mathcal{I}\!\left(T_i;\;
\Pi_{\mathcal{P}_i}\big|_{\mathcal{G}_\text{cent}};\;
\mathcal{G}_j\right)$ denotes the EIG contribution of robot $j$'s parameter block evaluated using the centralized map masking zone.

\begin{proof}
Under the diagonal approximation in~\eqref{eq:Fisher_bounds},
the Fisher information matrix is additive across independent parameter blocks~\cite{Khass2025_NBV_RiskAverse,jiang2024Fisherrf}. Since $\mathcal{G}_\text{cent}$ is a disjoint union, the parameter blocks $\{w_j\}_{j=1}^{N}$ are independent, and the trace based EIG over
$\mathcal{G}_\text{cent}$ decomposes into the sum of contributions from each local map $\mathcal{G}_j$. 
\end{proof}

The decomposition in~\eqref{eq:block_decomp} is structural. The $j$-th term represents the contribution of robot $j$'s private
local map to the centralized EIG of robot $i$'s candidate viewpoint. This is the key property that enables distributed information evaluation, since each map owner can compute its own contribution without transmitting raw observations or 3DGS
parameters.
\begin{proposition}[Local mask conservatism]
\label{prop:mask}
For any robot $i$ with planned trajectory $\mathcal{P}_i$, and any robot $j$
constructing the masked zone $\Pi_{\mathcal{P}_i}|_{\mathcal{G}_j}$
using only its private local map $\mathcal{G}_j$, we have
\begin{equation}
  \Pi_{\mathcal{P}_i}\big|_{\mathcal{G}_j}
  \;\subseteq\;
  \Pi_{\mathcal{P}_i}\big|_{\mathcal{G}_\text{cent}}
  \cap \mathcal{G}_j.
  \label{eq:mask_conservative}
\end{equation}
\end{proposition}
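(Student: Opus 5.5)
Unfolding \eqref{eq:masked_region_G} gives
\[
\Pi_{\mathcal{P}_i}\big|_{\mathcal{G}_j}=\{g_s\in\mathcal{G}_j : \mu_s\in\textstyle\bigcup_k \mathcal{B}(p_k, r^{(j)}_{\mathrm{mask}}(p_k))\},
\]
where $r^{(j)}_{\mathrm{mask}}$ is the masking radius \eqref{eq:masking_rad} computed with the risk field $\alpha_j$ that robot $j$ evaluates from $\mathcal{G}_j$ alone. Likewise, $\Pi_{\mathcal{P}_i}|_{\mathcal{G}_\text{cent}}\cap\mathcal{G}_j$ is the set of splats of $\mathcal{G}_j$ whose means lie in the balls of radius $r^{\mathrm{cent}}_{\mathrm{mask}}(p_k)$, which is computed from $\alpha_\text{cent}$ over $\mathcal{G}_\text{cent}$. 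Both sets are subsets of $\mathcal{G}_j$ selected by a ball-membership test around the same waypoints $p_k$. The inclusion \eqref{eq:mask_conservative} therefore follows once we show the pointwise radius inequality
\[
r^{(j)}_{\mathrm{mask}}(p_k)\le r^{\mathrm{cent}}_{\mathrm{mask}}(p_k)\quad\text{for every } k.
\]
Given that inequality, each ball $\mathcal{B}(p_k,r^{(j)}_{\mathrm{mask}}(p_k))$ is contained in the corresponding centralized ball. Hence the unions are nested, and any $g_s\in\mathcal{G}_j$ that passes the local test also passes the centralized test.

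To get the radius inequality, compare the risk fields. By \eqref{eq:risk_min_avar}, $\alpha_\text{cent}(p)=\min_{g_s\in\mathcal{G}_\text{cent}}\mathrm{AV@R}_\varepsilon(d(p,g_s))$. Since $\mathcal{G}_j\subseteq\mathcal{G}_\text{cent}$ by the disjoint union \eqref{eq:Gcent}, and a minimum over a larger set is no larger, we get $\alpha_\text{cent}(p)\le\alpha_j(p)$ for all $p$. The map $a\mapsto r_\text{min}+\beta_1e^{-\beta_2 a}$ is strictly decreasing because $\beta_1,\beta_2>0$. It follows that $r^{(j)}_{\mathrm{mask}}(p_k)\le r^{\mathrm{cent}}_{\mathrm{mask}}(p_k)$, which closes the argument.

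The step I expect to be the main obstacle is a modeling point rather than a computational one: the statement does not say which risk field each robot uses to build the mask. If robot $j$ used $\alpha_i$, received from the trajectory owner together with $\mathcal{P}_i$, the radii would not obviously be comparable. To handle this, I would state explicitly the convention that robot $j$ builds the mask using only $\mathcal{G}_j$, as the proposition says. Under that convention the inclusion is exactly the monotonicity argument above. I would also remark on the converse: the local mask is generally strictly smaller, because splats owned by other robots can make $\alpha_\text{cent}$ smaller near $p_k$ and so enlarge the centralized radius. This is the sense in which the local mask is \emph{conservative}.
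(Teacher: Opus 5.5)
Your proof is correct and takes the same approach as the paper. The minimum over $\mathcal{G}_j \subseteq \mathcal{G}_\text{cent}$ gives $\alpha_{\mathrm{cent}}(p) \le \alpha_j(p)$, the masking radius is decreasing in $\alpha$, and so the waypoint balls are nested. Your non-strict inequality is the correct one: the paper writes $\alpha^j(p) > \alpha^{\mathrm{cent}}(p)$, which fails whenever the minimizing splat already lies in $\mathcal{G}_j$. You also make explicit the last step, that both sets are ball-membership filters on $\mathcal{G}_j$ around the same waypoints, which the paper leaves implicit.
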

\vspace{0.25cm}
\begin{proof}
Since $\mathcal{G}_j \subseteq \mathcal{G}_\text{cent}$, the minimum in~\eqref{eq:risk_min_avar}
computed over $\mathcal{G}_j$ as the smaller set gives $\alpha^j(p)>\alpha^\text{cent}(p)$ for every $p$. Since the masking radius 
in~\eqref{eq:masking_rad} is decreasing in $\alpha$, it follows that $r^{j}_\text{mask}(p) \leq r^\text{cent}_\text{mask}(p)$, and hence each local waypoint ball is contained in its centralized counterpart.
\end{proof}

Combining Propositions~\ref{prop:localsum}, \ref{prop:mask} we now define the distributed EIG of
robot $i$ as the sum of its information contributions across all
local maps,
\begin{equation}
  \mathcal{I}^\text{dist}_i
  \;:=\;
  \sum_{j=1}^{N}
  \mathcal{I}\!\left(T_i;\;
    \Pi_{\mathcal{P}_i}\big|_{\mathcal{G}_j};\;
    \mathcal{G}_j\right).
  \label{eq:Idist}
\end{equation}

\begin{lemma}[Centralized EIG lower bound]
\label{le:lower_bound}
For each robot $i \in \{1, \ldots, N\}$, the distributed EIG is a lower bound of the centralized EIG
\begin{equation}
  \mathcal{I}^\text{dist}_i
  \;\leq\;
  \mathcal{I}\!\left(T_i;\;
    \Pi_{\mathcal{P}_i}\big|_{\mathcal{G}_\text{cent}};\;
    \mathcal{G}_\text{cent}\right).
  \label{eq:per_robot_bound}
\end{equation}
\end{lemma}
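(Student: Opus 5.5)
The plan is to compare the two sides term by term. By Proposition~\ref{prop:localsum}, the right-hand side of~\eqref{eq:per_robot_bound} splits as $\sum_{j=1}^{N}\mathcal{I}(T_i;\Pi_{\mathcal{P}_i}|_{\mathcal{G}_\text{cent}};\mathcal{G}_j)$. The left-hand side is defined in~\eqref{eq:Idist} as a sum over the same index $j$. It therefore suffices to prove, for each fixed $j$,
\begin{equation*}
\mathcal{I}\!\left(T_i;\Pi_{\mathcal{P}_i}\big|_{\mathcal{G}_j};\mathcal{G}_j\right)\;\leq\;\mathcal{I}\!\left(T_i;\Pi_{\mathcal{P}_i}\big|_{\mathcal{G}_\text{cent}};\mathcal{G}_j\right),
\end{equation*}
and then sum over $j$.

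For a fixed block $j$, I would write the trace in~\eqref{eq:weighted_eig} explicitly under the diagonal approximation. Both $\mathcal{H}[Y\mid T_i,\omega^\ast]$ and the prior Hessian are diagonal, and the weighting $\bar V$ is block-diagonal with blocks $v(g_s) I$. The trace then becomes a sum over splats $g_s\in\mathcal{G}_j$ of a per-splat term. Each term is the weight (mask indicator times proximity weight $v(g_s)$) multiplied by $\sum_{k}\big[\mathcal{H}_{s}\big]_{kk}/\big[\mathcal{H}_{\text{prior},s}\big]_{kk}$. This inner quantity is nonnegative, because the diagonal Fisher entries are squared gradients and the $+\lambda I$ regularization makes the prior strictly positive. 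The mask enters only through which splats receive nonzero weight, namely those $g_s$ in the masked set. So the EIG contribution of block $j$ is a sum of nonnegative terms indexed by $\Pi_{\mathcal{P}_i}|_{\cdot}\cap\mathcal{G}_j$.

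Proposition~\ref{prop:mask} then gives $\Pi_{\mathcal{P}_i}|_{\mathcal{G}_j}\subseteq\Pi_{\mathcal{P}_i}|_{\mathcal{G}_\text{cent}}\cap\mathcal{G}_j$. The left-hand sum is therefore over a subset of the index set of the right-hand sum, with identical per-splat terms and all terms nonnegative. This monotonicity of a nonnegative sum under set inclusion gives the per-block inequality, and summing over $j$ finishes the proof.

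The main obstacle is checking that the per-splat terms really coincide on both sides. Two things could make them differ:
\begin{itemize}
\item If the masked Hessian $H''$ is used in both factors of~\eqref{eq:weighted_eig}, the weights enter as $v^2$ or cancel. I would verify that, splat by splat, the weighted factor is still a nonnegative function that does not depend on which mask is chosen.
\item The prior Hessian for block $j$ must be identical in the centralized and local computations. This holds because it is accumulated from block $j$'s own parameters and views, and under block-diagonality it is unaffected by the other maps.
\end{itemize}
If the weighting cancels the mask dependence, I would instead treat the mask as an indicator multiplying each splat term and argue directly from $\mathbf{1}_{\Pi|_{\mathcal{G}_j}}\le \mathbf{1}_{\Pi|_{\mathcal{G}_\text{cent}}}$ on $\mathcal{G}_j$.
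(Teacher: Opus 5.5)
Your proposal is correct and follows the paper's proof: split the centralized EIG by Proposition~\ref{prop:localsum}, combine the inclusion from Proposition~\ref{prop:mask} with the nonnegativity and set-monotonicity of the diagonal-Fisher EIG to get the per-block inequality, and sum over $j$. Your per-splat expansion spells out the monotonicity claim the paper asserts in one line. Your identification of the per-splat terms on both sides rests on the same block-independence assumption the paper already uses in Proposition~\ref{prop:localsum}.
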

\vspace{0.25cm}
\begin{proof}
    By Proposition~2, we have
$\Pi_{\mathcal{P}_i}|_{\mathcal{G}_j} \subseteq
\Pi_{\mathcal{P}_i}|_{\mathcal{G}_\text{cent}} \cap \mathcal{G}_j$.
Under the diagonal Fisher information approximation, the EIG over a set of splats is nonnegative and monotone in the splat set, so
$\mathcal{I}(T_i;\;\Pi_{\mathcal{P}_i}|_{\mathcal{G}_j};\;\mathcal{G}_j)
\leq
\mathcal{I}(T_i;\;\Pi_{\mathcal{P}_i}|_{\mathcal{G}_\text{cent}};\;\mathcal{G}_j)$.
Summing over $j$ and applying Proposition~1
gives~\eqref{eq:per_robot_bound}.
\end{proof}

\subsection{Distributed Information Exchange}
We consider a team of $N$ robots that exchange information with each other over a connected communication graph. Each robot $i$ shares its candidate viewpoint $T_i$ and the planned trajectory
$\mathcal{P}_i$ with the team. Given the shared trajectory, robot $j$ constructs a risk-aware masked zone in its local map $\mathcal{G}_j$, denoted $\Pi_{\mathcal{P}_i}|_{\mathcal{G}_j}$.
This masked region represents the subset of Gaussian splats in $\mathcal{G}_j$ along the trajectory $\mathcal{P}_i$. Using this masked region, robot $j$ evaluates the expected information gain
(EIG) of robot $i$'s candidate viewpoint with respect to its local map as
\begin{equation}
  \mathcal{I}\!\left(T_i;\;
    \Pi_{\mathcal{P}_i}\big|_{\mathcal{G}_j};\;
    \mathcal{G}_j\right),
  \label{eq:Iij}
\end{equation}
which returns the scalar EIG and its gradient with respect to $T_i$ to robot $i$. No raw observations, Gaussian primitives, or 3DGS parameters are transmitted.
Therefore, by Lemma~\ref{le:lower_bound}, the distributed multi-robot NBV objective
can be written as
\begin{equation}
  \{T_i^{*}\}_{i=1}^{N}
  \;=\;
  \underset{\{T_i\}_{i=1}^{N}}{\arg\textrm{maximize}}\;
  \;\sum_{i=1}^{N} \sum_{j=1}^{N}
  \mathcal{I}\!\left(T_i;\;
    \Pi_{\mathcal{P}_i}\big|_{\mathcal{G}_j};\;
    \mathcal{G}_j\right).
  \label{eq:dist_obj}
\end{equation}
The objective in~\eqref{eq:dist_obj} maximizes the total EIG across
the robot team by allowing each robot to evaluate its candidate
viewpoint using both its own map and the maps of other robots, without sharing their local maps.

\subsection{Distributed Optimization using ADMM}
To solve the distributed NBV optimization problem
in~\eqref{eq:dist_obj}, we employ the Consensus Alternating Direction Method of Multipliers (C-ADMM)~\cite{boyd2011admm}. Let
\begin{equation*}
  T^{(i)}
  \;:=\;
  \big(T^{(i)}_1, T^{(i)}_2, \ldots, T^{(i)}_N\big)
\end{equation*}
denote robot $i$'s local copy of the team viewpoint vector, where $T^{(i)}_k$ is robot $i$'s copy of robot $k$'s viewpoint. Each robot's contribution to the team objective is the sum of EIG terms
evaluated on its own local map,
\begin{equation*}
  F_i(\T^{(i)})
  \;:=\;
  \sum_{k=1}^{N}
  \mathcal{I}\!\left(T^{(i)}_k;\;
    \Pi_{\mathcal{P}_k}\big|_{\mathcal{G}_i};\;
    \mathcal{G}_i\right),
\end{equation*}
which robot $i$ can evaluate locally using $\mathcal{G}_i$ and the trajectories $\{\mathcal{P}_k\}_{k=1}^{N}$ received from
the team.

The distributed optimization problem is then derived as
\begin{equation}
  \begin{aligned}
    \{T_i^{*}\}_{i=1}^{N}
    \;=\; & \argmax_{\{T^{(i)}\}}
      \;\sum_{i=1}^{N} F_i\!\big(T^{(i)}\big) \\
    \text{s.t.}\;\; & T^{(i)} = T^{(j)},
      \quad \forall (i,j) \in \mathcal{E},
  \end{aligned}
  \label{eq:cadmm_problem}
\end{equation}
where the edge-wise consensus constraints enforce agreement between the local copies of neighboring robots. For each communication edge $(i,j) \in \mathcal{E}$, we introduce an auxiliary consensus variable $z_{ij}$ and a dual variable $\lambda_{ij}$.

The augmented Lagrangian associated
with~\eqref{eq:cadmm_problem} is formulated as
\begin{multline}
\mathcal{L}_a
= \sum_{i=1}^{N} \Big[-F_i\big(T^{(i)}\big)\Big] \\
+ \sum_{i=1}^{N} \sum_{j \in \mathcal{N}_i}
    \!\Big[
      \lambda_{ij}^{\top}\big(T^{(i)} - z_{ij}\big)
      + \tfrac{\rho}{2}\big\|T^{(i)} - z_{ij}\big\|_2^{2}
    \Big],
  \label{eq:aug_lag}
\end{multline}
where $\rho > 0$ is the penalty parameter and $\|\cdot\|_2$ denotes the Euclidean norm.

\textit{Remark 1:} Viewpoint $T_i \in SE(3)$ has its translational component fixed to the trajectory waypoint, and its orientation restricted to a yaw rotation $\psi_i \in \mathbb{S}^1$ within $[-\pi,\pi]$~\cite{Khass2025_NBV_RiskAverse}.
The team viewpoint vector $T^{(i)} = (\psi^{(i)}_1,\dots,\psi^{(i)}_N)$
is therefore identified with a vector in $\mathbb{R}^N$, so the inner product, subtraction, and arithmetic average in~\eqref{eq:aug_lag}--\eqref{eq:cadmm_dual}
are well-defined componentwise. We initialize $\lambda_{ij}^0 = 0$ for all $(i,j) \in \mathcal{E}$, which together with the symmetric updates maintains $\lambda_{ij}^t + \lambda_{ji}^t = 0$ throughout, so the
$z$-update reduces to the arithmetic mean of the endpoint copies. For full $SE(3)$ viewpoint optimization, the updates generalize by replacing the Euclidean operations with their Lie-algebra
counterparts on $\mathfrak{se}(3) \cong \mathbb{R}^6$.

The C-ADMM updates at iteration $t$ are given by
\begin{align}
  T^{(i),t+1}
  \;&=\; \argmin_{T^{(i)}}\;
    \bigg[\,
      {-}F_i\!\big(T^{(i)}\big)
      + \sum_{j \in \mathcal{N}_i}
        \big(\lambda^{t}_{ij}\big)^{\!\top}\!
        \big(T^{(i)} - z^{t}_{ij}\big)
        \nonumber\\
    & \qquad\quad
      + \tfrac{\rho}{2}
        \sum_{j \in \mathcal{N}_i}
        \big\|T^{(i)} - z^{t}_{ij}\big\|_2^{2}
    \bigg],
  \label{eq:cadmm_primal}\\[2pt]
  z^{t+1}_{ij}
  \;&=\;
  \tfrac{1}{2}\!\left(T^{(i),t+1} + T^{(j),t+1}\right),
  \quad \forall (i,j) \in \mathcal{E},
  \label{eq:cadmm_z}\\[2pt]
  \lambda^{t+1}_{ij}
  \;&=\;
  \lambda^{t}_{ij}
  + \rho\!\left(T^{(i),t+1} - z^{t+1}_{ij}\right),
  \quad \forall (i,j) \in \mathcal{E},
  \label{eq:cadmm_dual}
\end{align}
where $\mathcal{N}_i$ denotes the one-hop neighborhood of robot $i$
in the communication graph. The primal update
in~\eqref{eq:cadmm_primal} is solved using stochastic gradient descent on the local viewpoint parameters. The consensus and dual updates in~\eqref{eq:cadmm_z}--\eqref{eq:cadmm_dual} require only exchange of local copies along $\mathcal{E}$, so the iteration communication payload is independent of the size of the 3DGS maps.
\begin{algorithm}[t!]
\caption{Distributed Risk-Aware Multi-Agent NBV Selection}
\label{alg:distributed_nbv}
\begin{algorithmic}[1]
\STATE \textbf{Input:} Team of robots $N$, local maps $\{\mathcal{G}_i\}_{i=1}^N$, planned trajectories $\{\mathcal{P}_i\}_{i=1}^N$, communication graph $(\mathcal{V},\mathcal{E})$
\STATE \textbf{Output:} $\{T_i^\star\}_{i=1}^N$
\STATE Initialize  $\{T^{(i),0}\}_{i=1}^N$,  $\{z_{ij}^{0}\}_{(i,j)\in\mathcal{E}}$, $\{\lambda_{ij}^{0}\}_{(i,j)\in\mathcal{E}}$
\FORALL{$i \in \{1,\dots,N\}$}
    \STATE Share $(T_i,\mathcal{P}_i)$ with neighbors $\mathcal{N}_i$
    \STATE Construct $\Pi_{\mathcal{P}_k}\big|_{\mathcal{G}_i}$ for $k \in \mathcal{N}_i$
\ENDFOR
\FOR{$t=0$ to $T_{\mathrm{ADMM}}-1$}
    \FORALL{$i \in \{1,\dots,N\}$}
        \STATE Update local copy $T^{(i),t+1}$ via \eqref{eq:cadmm_primal}
        \STATE Exchange $T^{(i),t+1}$ with neighbors $j \in \mathcal{N}_i$
    \ENDFOR
    \FORALL{$(i,j) \in \mathcal{E}$}
        \STATE Update edge consensus $z_{ij}^{t+1}$ via \eqref{eq:cadmm_z}
        \STATE Update edge dual $\lambda_{ij}^{t+1}$ via \eqref{eq:cadmm_dual}
    \ENDFOR
\ENDFOR
\FORALL{$i \in \{1,\dots,N\}$}
    \STATE $T_i^\star \leftarrow \big[T^{(i),T_{\mathrm{ADMM}}}\big]_{i}$
    \STATE Execute $T_i^\star$ and update $\mathcal{G}_i$
\ENDFOR
\end{algorithmic}
\end{algorithm}

\begin{table*}[t]
\vspace{0.30cm}
\centering
\renewcommand{\arraystretch}{1}
\setlength{\tabcolsep}{5pt}
\begin{tabular}{l|ccc|ccc|ccc|ccc}
\toprule
\textbf{Algorithm}
& \multicolumn{3}{c|}{\textbf{Denmark}}
& \multicolumn{3}{c|}{\textbf{Cantwell}}
& \multicolumn{3}{c|}{\textbf{Ribera}}
& \multicolumn{3}{c}{\textbf{Swormville}} \\
\cmidrule(lr){2-4} \cmidrule(lr){5-7} \cmidrule(lr){8-10} \cmidrule(lr){11-13}
& AV@R$\uparrow$ & PSNR$\uparrow$ & Depth$\downarrow$
& AV@R$\uparrow$ & PSNR$\uparrow$ & Depth$\downarrow$
& AV@R$\uparrow$ & PSNR$\uparrow$ & Depth$\downarrow$
& AV@R$\uparrow$ & PSNR$\uparrow$ & Depth$\downarrow$ \\
\midrule
Centralized
& 0.85 & 19.57 & 0.15
& 0.29 & 17.68 & 0.25
& 0.82 & 18.13 & 0.28
& 0.63 & 20.55 & 0.15\\
Distributed
& 0.72 & 18.85 & 0.18
& 0.22 & 16.98 & 0.29
& 0.75 & 17.09 & 0.36
& 0.61 & 20.20 & 0.15 \\
Limited Sharing
& 0.80 & 19.21 & 0.16
& 0.28 & 17.61 & 0.26
& 0.76 & 17.39 & 0.35
& 0.61 & 20.53 & 0.15 \\
Single Agent
& 0.67 & 16.75 & 0.31
& 0.18 & 15.41 & 0.35
& 0.57 & 14.65 & 0.42
& 0.57 & 16.45 & 0.41 \\
\bottomrule
\end{tabular}
\caption{
Performance comparison of the proposed algorithms and the single-agent method across Gibson environments.
We report AV@R (risk), PSNR, and mean depth error calculated at the robot's trajectory waypoints for different viewing angles, averaged across five agents. 
Lower depth error is better, while higher AV@R and PSNR indicate improved performance.
}
\label{tab:gibson_env_algorithm_nomask}
\vspace{-0.35cm}
\end{table*}
\begin{figure*}[t]
    \centering
\includegraphics[width=\textwidth,height=0.50\textheight,keepaspectratio,
    trim={1cm 11.5cm 2.1cm 0.5cm},clip]{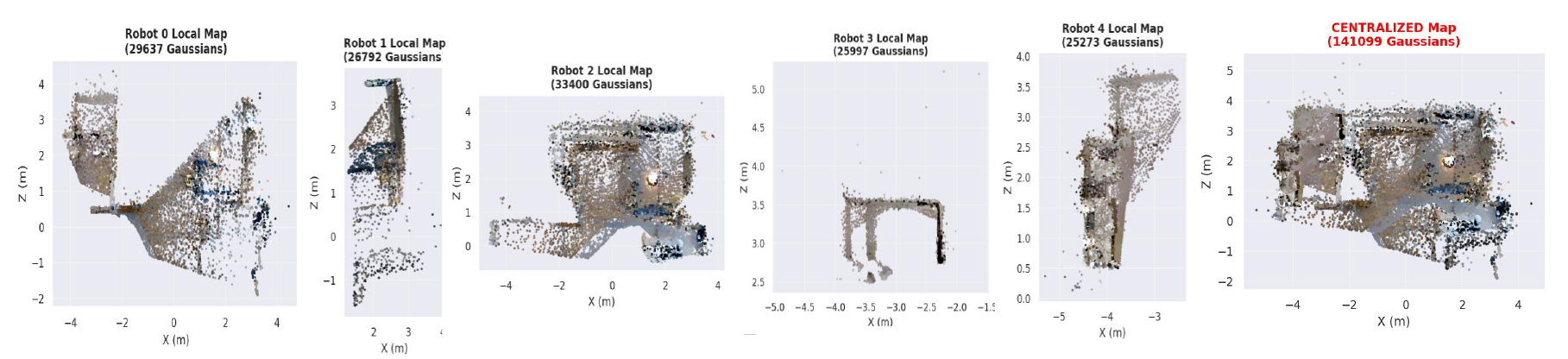}
    \caption{Centralized map sharing with five agents where with small local maps, the global map and sharing size grow rapidly.}
\end{figure*}

\section{Experiments}
\label{sec:experiments}
We evaluate the proposed multi-agent NBV optimization in three settings: (i) a centralized baseline with a shared global map, (ii) a distributed viewpoint-suggestion method with private local 3DGS maps, and (iii) a limited-sharing method that combines selective map exchange with distributed viewpoint suggestions.
\subsection{Implementation Pipeline}
To navigate toward the goal in unknown environments, each agent adopts a frontier-based subgoal selection strategy. At each planning cycle, frontiers are extracted from the agent's updated map and filtered based on their size and proximity to the goal to form a candidate subgoal set. The agent then selects the most informative subgoal and computes a safe future trajectory using a risk-averse $A^\star$ planner that jointly accounts for path length and collision risk, yielding a sequence of waypoints $\mathcal{P}=\{p_1,p_2,\ldots,p_K\}$. The proposed NBV framework is subsequently applied along $\mathcal{P}$ to reduce uncertainty in trajectory-relevant regions while maintaining safety. This procedure is repeated in a receding horizon until the goal is reached; the pipeline is summarized in Algorithm~\ref{alg:distributed_nbv}. We also evaluate a limited-sharing variant, in which robots exchange only sparse, trajectory-relevant subsets of their local 3DGS maps selected from team-suggested informative viewpoints. The updated local maps are then used in the distributed risk-aware NBV optimization.
\begin{table}[t!]
\centering
\renewcommand{\arraystretch}{1}
\setlength{\tabcolsep}{9pt}
\small
\begin{tabular}{l|cc}
\toprule
\textbf{Algorithms}
& \multicolumn{2}{c}{\textbf{Communication size Per Robot}} \\
\cmidrule(lr){2-3}
& \makecell{Received}
& \makecell{Sent} \\
\midrule
Distributed
& 0.028 KB & 0.206 KB \\
Limited Sharing
& 25 KB   & 25 KB   \\
\bottomrule
\end{tabular}
\caption{
Per-robot communication size for the Distributed and limited sharing methods.
}
\vspace{-0.35cm}
\label{tab:comm_single_agent}
\end{table}
\subsection{Experimental Setup}
\label{subsec:exp_setup}
We use the Habitat simulator~\cite{szot2021habitat} in indoor Gibson environments~\cite{xia2018gibson}. In each experiment, agents are initially located at the Habitat default positions in an empty map with no prior information. We assume each agent has an attached RGB-D camera. 
At each frame, the agent acquires a synchronized color and depth image with size $800 \times 800$ pixels. The camera field of view is set to $90^\circ$ vertically and horizontally. 
Agents operate in a discrete action space consisting of:
(i) \textsc{Move Forward} by $5$\,cm,
(ii) \textsc{Turn Left} by $5^\circ$, and
(iii) \textsc{Turn Right} by $5^\circ$ to
follow a risk-averse optimized path planned via the $A^\star$ algorithm at each planning step. In this experiment, a path is planned on a 2D occupancy grid with a 5cm resolution. Rather than evaluating a discrete set of candidate viewpoints, the agent at each pose optimizes its viewing direction using a gradient-based method to find the optimal pose.
Each agent updates its map incrementally as new observations are collected. Experiments are implemented on a workstation equipped with an Intel Core i9-13900K CPU and NVIDIA
RTX~A2000 GPU.
\begin{table}[t!]
\centering
\renewcommand{\arraystretch}{0.9}
\setlength{\tabcolsep}{8pt}
\small
\begin{tabular}{l|ccc}
\toprule
\textbf{Algorithm}
& \multicolumn{3}{c}{\textbf{2 Agents Communication Size}} \\
\cmidrule(lr){2-4}
& \makecell{Agent 1 \\ }
& \makecell{Agent 2 \\ }
& \makecell{Shared Data size \\ } \\
\midrule
\multirow{4}{*}{Centralized}
& 34{,}911  & 26{,}800  & 3.26 MB  \\
& 62{,}634  & 60{,}898  & 6.59 MB  \\
& 161{,}396 & 148{,}005 & 16.50 MB \\
& 190{,}716 & 189{,}123 & 20.28 MB \\
\bottomrule
\end{tabular}
\caption{
Communication analysis of the centralized algorithm for two agents reports the local map size (splats) for each agent and the size of the shared data transmitted.
}
\vspace{-0.35cm}
\end{table}
\begin{table*}[t]
\vspace{0.35cm}
\centering
\renewcommand{\arraystretch}{1.1}
\setlength{\tabcolsep}{6pt}
\begin{tabular}{l|cc|cc|cc|cc}
\toprule
\textbf{Algorithm}
& \multicolumn{2}{c|}{\textbf{Denmark}}
& \multicolumn{2}{c|}{\textbf{Cantwell}}
& \multicolumn{2}{c|}{\textbf{Ribera}}
& \multicolumn{2}{c}{\textbf{Swormville}} \\
\cmidrule(lr){2-3}
\cmidrule(lr){4-5}
\cmidrule(lr){6-7}
\cmidrule(lr){8-9}
& Path length$\downarrow$ & AV@R$\uparrow$
& Path length$\downarrow$ & AV@R$\uparrow$
& Path length$\downarrow$ & AV@R$\uparrow$
& Path length$\downarrow$ & AV@R$\uparrow$ \\
\midrule
Centralized        
& 27.9\% & 43.1\% 
& 27.6\% & 13.3\% 
& 11.7\% & 20.7\%
& 24.7\% & 38.1\% \\
Distributed      
& 29.5\% & 37.8\% 
& 31.6\% & 12.5\%
& 14.3\% & 19.7\% 
& 28.8\% & 37.0\% \\
Limited Sharing    
& 28.2\% & 41.5\% 
& 29.8\% & 13.3\%  
& 12.1\% & 20.1\% 
& 25.3\% & 37.5\% \\
Single Agent       
& 31.1\% & 36.8\%
& 36.2\% & 11.7\%
& 15.7\% & 17.2\% 
& 30.9\% & 35.5\% \\
\bottomrule
\end{tabular}
\vspace{2pt}
\caption{
Quantitative comparison of path length and AV@R for each algorithm across different environments. Each robot planned a safe path toward the selected frontier and optimized the yaw angle of the next pose to maximize the expected information gain. Robots replanning their safe path after updating their map belief.}
\label{tab:gibson_env_algorithm_nomask}
\end{table*}
\begin{figure*}[t]
    \centering
\includegraphics[width=\textwidth,height=0.50\textheight,keepaspectratio,
    trim={0cm 5.5cm 0.1cm 3.5cm},clip]{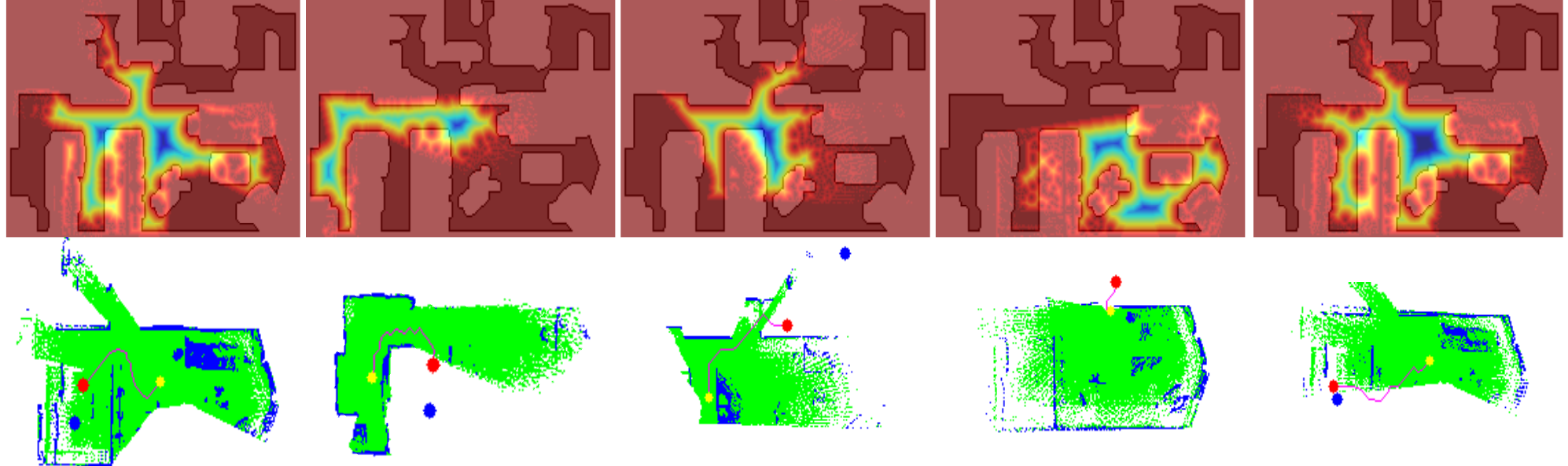}
    \caption{Illustration of five robots' local map beliefs with the limited-sharing algorithm. The top row indicates the collision risk heat map. The bottom row shows the robot's safe path to the next waypoints in each robot's local map belief.}
    \label{fig:limited_sharing}
\end{figure*}
\subsection{Evaluation Metrics}
\label{subsec:metrics}
We evaluate our method across safe planning, mapping quality, and communication efficiency.
To compare the trajectory safety level, we adopt the AV@R metric defined in section ~\ref{subsec:risk_metric}. After each algorithm execution with updated local maps, we compute a reference path using $A^\star$ from the same start/goal position and report $\text{Total Risk}_\varepsilon$ as $\frac{1}{K}\sum_{k=1}^{K}\alpha(p_k),$
where $\alpha(\cdot)$ denotes the AV@R value of the planned trajectory
$\mathcal{P}=\{p_1,\dots,p_K\}$ (see Table~\ref{tab:gibson_env_algorithm_nomask}).
\begin{table}[t]
\centering
\small
\renewcommand{\arraystretch}{1.05}
\setlength{\tabcolsep}{3pt}

\resizebox{\columnwidth}{!}{%
\begin{tabular}{lcccccc}
\toprule
\textbf{Algorithm} &
\makecell{$R_1$\\\footnotesize(Splats)} &
\makecell{$R_2$\\\footnotesize(Splats)} &
\makecell{$R_3$\\\footnotesize(Splats)} &
\makecell{$R_4$\\\footnotesize(Splats)} &
\makecell{$R_5$\\\footnotesize(Splats)} &
\makecell{\textbf{Total}\\\footnotesize(MB)} \\
\midrule
Centralized & 56{,}741 & 51{,}055 & 30{,}174 & 50{,}368 & 22{,}909 & 11.19 \\
Centralized & 64{,}612 & 55{,}718 & 34{,}548 & 61{,}713 & 37{,}310 & 13.55 \\
\bottomrule
\end{tabular}%
}
\caption{Communication size for five agents with a centralized algorithm reports the number of splats in local 3DGS maps and the total data exchanged size.}
\vspace{-0.35cm}
\label{tab:comm_5agents_centralized}
\end{table}

\begin{table}[t!]
\centering
\renewcommand{\arraystretch}{1.1}
\setlength{\tabcolsep}{15pt}
\begin{tabular}{l|cc}
\toprule
\textbf{Team size} & \multicolumn{2}{c}{\textbf{Shared data size}} \\
\cmidrule(lr){2-3}
& \textbf{Centralized} & \textbf{Limited Sharing} \\
\midrule
2 Agents  & 3.63 MB  & 0.050 MB \\
5 Agents  & 8.95 MB  & 0.50 MB  \\
7 Agents  & 10.39 MB & 1.050 MB \\
10 Agents & 16.84 MB & 2.150 MB \\
\bottomrule
\end{tabular}
\caption{Comparison of shared data sizes across varying team sizes using centralized and limited-sharing algorithms during the same planning step in the Cantwell scene (Gibson dataset).
 }
\label{tab:shared_data_size}
\vspace{-0.45cm}
\end{table}
In addition, we report PSNR (Peak Signal-to-Noise Ratio)~\cite{HoreZiou2010PSNRSSIM} and depth MAE (Mean Absolute Error) to evaluate mapping quality by rendering views at each pose along the trajectory with different viewing angles.
Moreover, to quantify communication efficiency and scalability, we report the number of bytes transmitted per map share, based on the size of the 3DGS map representation. This metric characterizes each method's communication load and highlights the cost-benefit trade-off among algorithms.
\begin{figure}[t]
    \centering
\includegraphics[width=1.15\linewidth,height=1.2\textheight,keepaspectratio, trim={2.9cm 0.7cm 0.7cm 0.5cm},
    clip]{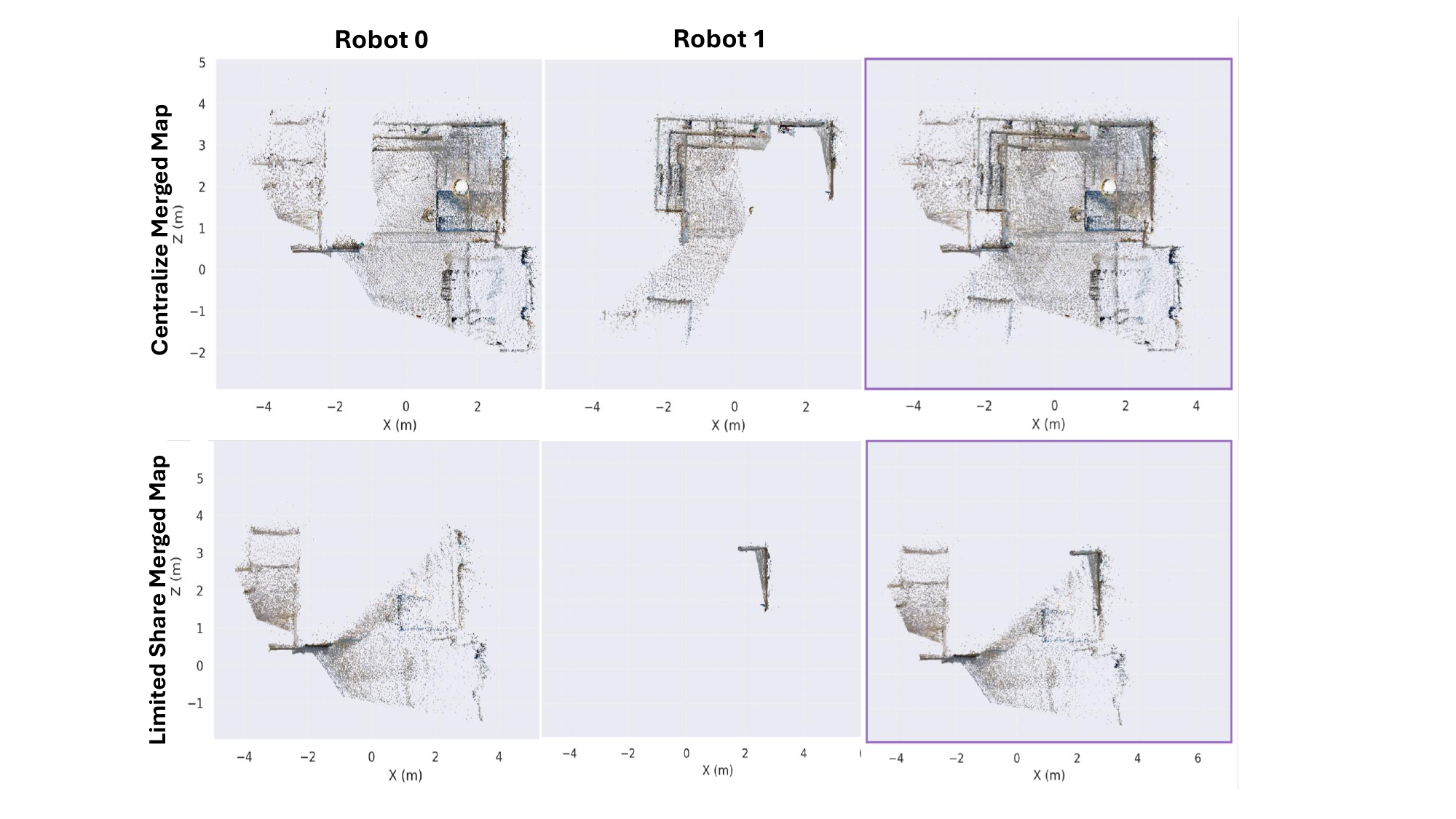}
    \caption{
    Comparison of local 3DGS maps sharing for two agents with centralized and limited-sharing algorithms.}
    \label{fig:limited_sharing}
    \vspace{-0.5 cm}
\end{figure}
\subsection{Performance and Communication Analysis}
\label{subsec:results}
We compare the centralized, distributed, and limited-sharing methods. In the centralized approach, robots share their local 3DGS maps to construct a global map. As shown in Tables~II, III, V, and VI, this strategy incurs communication costs that grow rapidly with map size, even for relatively small local maps. In practice, 3DGS maps may contain $10^6$--$10^7$ splats, resulting in communication payloads on the order of tens to hundreds of megabytes.
In contrast, the communication load of the distributed method is independent of local map size because robots exchange only compact viewpoint-related information. Under an ideal 64-bit floating-point representation, the communication payload is 136 bytes per exchange. In our implementation, the exchanged data are encoded in JSON format.

The limited-sharing method bridges these two extremes by initially sharing a robot's locally optimal NBV with a bounded set of Gaussian splats selected from the corresponding view cone, followed by pose exchange. We cap the number of shared splats at 400, which makes the communication cost effectively independent of the full local map size, as shown in Table~II.

Table~IV compares the performance of the algorithm across multiple Gibson environments. For each method, a reference trajectory is planned using $A^\star$ from the same start and goal positions in the local map. The multi-agent methods consistently outperform the single-agent baseline in both risk and reconstruction quality. Although the mapping metrics do not exceed prior mapping-focused baselines~\cite{Jiang2024_AGSLAM}, this is expected for two reasons: first, those methods evaluate mapping performance over much longer horizons, whereas our experiments focus on shorter task-driven trajectories; second, our objective is not full-scene reconstruction, but uncertainty reduction in trajectory-relevant regions to support safer and more efficient navigation.

\section{Conclusion and Limitations}
We proposed Multi-Agent Next-Best-View Optimization for Risk-Averse Planning framework based on 3D Gaussian splatting maps. Optimizing view point to reduce uncertainty in regions with higher risk of collision along each robot's trajectory enables safe path planning under limited communication. Simulation results show that distributed, limited-sharing algorithms reduce collision risk and trajectory-relevant map uncertainty while improving scalability.
The current study is limited to simulation and does not yet model real-world sensing. In addition, the proposed framework targets trajectory-relevant uncertainty reduction by design, which is the regime relevant to safe navigation. It assumes that each robot has onboard communication and computation capabilities to perform the required coordination.

{\footnotesize
\bibliographystyle{IEEEtran} 
\bibliography{refs}
}
\addtolength{\textheight}{-12cm}   

\end{document}